\documentclass[conference]{IEEEtran}
\IEEEoverridecommandlockouts

\usepackage{cite}
\usepackage{amsmath,amssymb,amsfonts}
\usepackage{algorithmic}
\usepackage{graphicx}
\usepackage{textcomp}
\usepackage{xcolor}

\usepackage{amsthm}

\theoremstyle{plain}
\newtheorem{theorem}{Theorem}
\newtheorem{corollary}[theorem]{Corollary}
\newtheorem{lemma}[theorem]{Lemma}
\newtheorem{proposition}[theorem]{Proposition}
\theoremstyle{definition}
\newtheorem{definition}[theorem]{Definition}
\newtheorem{example}[theorem]{Example}
\theoremstyle{remark}
\newtheorem{remark}[theorem]{Remark}

\usepackage{booktabs}
\usepackage{array}

\newcommand{\EMD}{\mathrm{EMD}}
\newcommand{\ECR}{\mathrm{ECR}}
\newcommand{\ACR}{\mathrm{ACR}}
\DeclareMathOperator{\tr}{tr}
\DeclareMathOperator{\ran}{ran}
\newcommand{\symdiff}{\,\triangle\,}
\newcommand{\Ep}{E_{\mathrm{p}}}
\newcommand{\Ew}{E_{\mathrm{w}}}
\newcommand{\pin}{p_{\mathrm{in}}}
\newcommand{\pout}{p_{\mathrm{out}}}

\def\BibTeX{{\rm B\kern-.05em{\sc i\kern-.025em b}\kern-.08em
    T\kern-.1667em\lower.7ex\hbox{E}\kern-.125emX}}
\begin{document}

\title{A Spectral Theory of Distortion in LLM Graph Reconstruction:
Sharp Bounds and Empirical Characterization}

\author{\IEEEauthorblockN{Jianru Shen}
\IEEEauthorblockA{\textit{Columbia University} \\
New York, NY, USA \\
js5929@columbia.edu}}

\maketitle

\begin{figure*}[t]
\centering
\includegraphics[width=\textwidth]{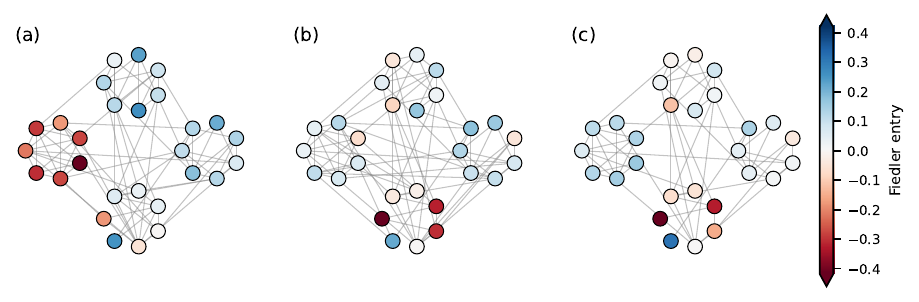}
\caption{Two reconstructions of community instance g5 ($n=28$); layout
and node color are fixed from the original graph, so only the edge
sets differ. (a)~Original. (b)~Preserves the edge count ($\ECR=1$)
while adding and deleting $19$ edges each, certified mixed from scalar
summaries. (c)~Verbatim return of the prompted sublist. EMD and
edge-count summaries can thus obscure the edit mechanism; neither
reconstruction is implied to be faithful.}
\label{fig:teaser}
\end{figure*}

\begin{abstract}
Evaluations of graph reconstruction by language models typically
report a single aggregate distance between the original and the
reconstructed graph. We prove that for the Wasserstein distance
between Laplacian spectra such a summary is bracketed by two edge
counts, the net change in edge number from below and the symmetric
difference from above, each scaled by $2/n$ where $n$ is the number of
vertices. The bracket is sharp: its two ends coincide
exactly when the reconstruction only adds edges or only deletes them,
and on that class the distance is a rescaled edge count that says
nothing about which edges changed. When the ends differ, the residual
between the distance and the lower end is positive only if the
reconstruction both invented and lost edges, which turns it into a
certificate of mixed editing computable from the reported summaries
alone. We characterize these regimes in 135 reconstructions produced by
three open-weight models over 45 synthetic graphs. Seventy-seven
outputs are one-sided and 29 mixed outputs have $X>0$, including cases
where edge count is exactly preserved while nineteen edges were
simultaneously invented and lost. The three models differ in editing policy, ranging from copying the
input to attempting completion at the cost of large hallucination
volume, a distinction that aggregate distortion does not reveal.
\end{abstract}

\begin{IEEEkeywords}
spectral graph theory, Laplacian eigenvalues, large language models,
graph reconstruction, model evaluation
\end{IEEEkeywords}

\section{Introduction}
\label{sec:intro}

Large language models are increasingly applied to graph-structured
tasks, including node classification, link prediction, and reasoning
over topology described in natural language
\cite{Jin:2023,Wang:2023,Guo:2023}. A basic diagnostic in this setting
is reconstruction: the model receives a partial description of a graph
and returns what it takes the complete edge set to be. Assessing the
result means comparing two graphs on a common vertex set, and the
comparison is usually reduced to a scalar, such as edge accuracy or
edit distance \cite{Wang:2023,Guo:2023,Fatemi:2023}, or a
Laplacian-spectral distance \cite{GuHuaLiu:2015,Tsitsulin:2018}.

One scalar cannot say how a reconstruction failed. A model that omits
twenty edges and a model that invents twenty edges can produce the
same distance to the original, although the first is conservative and
the second hallucinates. This is usually treated as a known
imprecision of aggregate metrics. We show that for spectral distortion
it is a theorem, and that the same argument yields a test that repairs
part of the loss at no additional cost. Figure~\ref{fig:teaser} shows two reconstructions of one graph whose
scalar summaries obscure different edit mechanisms.

Let $G'$ be a reconstruction of $G$ on the same vertex set and let
$\EMD$ denote the Wasserstein distance between their Laplacian
spectra. Our main result brackets $\EMD$ between two edge counts: the
net change $\bigl||E|-|E'|\bigr|$ from below and the symmetric
difference $|E\symdiff E'|$ from above, both scaled by $2/n$. The
bracket is sharp in a strong sense. Its two ends coincide exactly when
the edit is one-sided, that is when the model only added edges or only
deleted them, and on that class $\EMD$ equals a rescaled edge count
and carries no further information. When the two ends differ, the
residual between $\EMD$ and the lower end is positive only if the edit
is mixed, which turns that residual into a certificate of
simultaneous hallucination and loss. Once EMD, ECR, $|E|$ and $n$ have
been computed, the certificate requires no access to the identities of
the reconstructed edges.

We evaluate three open-weight models on 45 synthetic graphs spanning
block, preferential attachment, and lattice families. The bounds hold
on all 135 reconstructions. Seventy-seven fall in the degenerate class
where distortion is provably an edge count, so a per family average of
$\EMD$ over such a corpus reproduces the density profile of the
benchmark rather than any property of a model. Twenty-nine mixed outputs have $X>0$, including two
edge-count-preserving outputs for which no individual scalar reveals
the simultaneous additions and deletions of nineteen and fifteen edges
respectively. Aggregate distortion separates the three models by magnitude but
obscures substantial differences in editing policy, ranging from
returning the prompt verbatim to attempting completion at the cost of
large hallucination volume.

Prior work on graph reasoning with language models evaluates output at
the task level or through edit distance
\cite{Wang:2023,Fatemi:2023,Guo:2023} and treats the metric as given.
For graph comparison, Gu, Hua, and Liu define Wasserstein distances
between normalized-Laplacian spectral measures and derive $O(1/n)$
stability under bounded graph operations \cite{GuHuaLiu:2015}, while
NetLSD compares graphs through Laplacian heat-trace signatures
\cite{Tsitsulin:2018}. Our setting instead fixes a common vertex set
and the combinatorial Laplacian and yields exact finite-$n$ edge-edit
bounds in terms of the net edge-count change and the symmetric
difference, together with equality characterizations and a mixed-edit
certificate. The perturbation inequalities used in the proof, due to
Weyl and Mirsky \cite{Mirsky:1960,Bhatia:1997}, are classical; the
contribution is the resulting characterization and its empirical
regime analysis.

\section{Spectral Bounds for Reconstruction Distortion}
\label{sec:theory}

\subsection{Notation}
\label{sec:notation}

Let $G=(V,E)$ be a simple undirected graph and let $G'=(V,E')$ be a
reconstruction of $G$ on the same vertex set, $|V|=n$. Their
combinatorial Laplacians are $L=D-A$ and $L'=D'-A'$, with degree
matrices $D,D'$ and adjacency matrices $A,A'$. Eigenvalues are always
listed in nondecreasing order,
$\lambda_1\le\cdots\le\lambda_n$ for $L$ and
$\lambda'_1\le\cdots\le\lambda'_n$ for $L'$, and we write
$\delta_i=\lambda'_i-\lambda_i$ for the pointwise spectral difference
at index $i$. The Fiedler value is $\lambda_2$ and the Fiedler vector
$v$ is a unit eigenvector for it.

An edit is described by the added set $E'\setminus E$ and the deleted
set $E\setminus E'$, with cardinalities $a=|E'\setminus E|$ and
$d=|E\setminus E'|$, so that $|E\symdiff E'|=a+d$ and
$|E'|-|E|=a-d$. An edit is \emph{one-sided} if $\min(a,d)=0$ and
\emph{mixed} otherwise. For an edge $e=\{u,v\}$ let $b_e$ be its
signed incidence vector, with entries $+1$ at $u$, $-1$ at $v$, and
$0$ elsewhere; then $b_eb_e^\top\succeq0$ and
$\tr(b_eb_e^\top)=\|b_e\|^2=2$. The perturbation induced by the edit
is
\begin{equation}
\Delta=L'-L=P-M,\quad
P=\!\!\!\sum_{e\in E'\setminus E}\!\!\! b_eb_e^\top,\quad
M=\!\!\!\sum_{e\in E\setminus E'}\!\!\! b_eb_e^\top,
\label{eq:delta}
\end{equation}
with $P,M\succeq0$, and $\|\cdot\|_*$ denotes the trace norm.

We use three scalar summaries. The spectral distortion is the
Wasserstein distance between the empirical spectral measures
$\mu_G=\frac1n\sum_{i=1}^n\delta_{\lambda_i}$ and
$\mu_{G'}=\frac1n\sum_{i=1}^n\delta_{\lambda'_i}$, which on the real
line is realized by the monotone coupling \cite{Santambrogio:2015} and
therefore equals
\begin{equation}
\EMD(G,G')=W_1(\mu_G,\mu_{G'})
=\frac1n\sum_{i=1}^{n}\bigl|\delta_i\bigr|.
\label{eq:emd}
\end{equation}
The edge count ratio is $\ECR=|E'|/|E|$, defined whenever $|E|>0$ as
holds for all benchmark graphs, and the algebraic
connectivity ratio is $\ACR=\lambda'_2/\lambda_2$, defined when
$\lambda_2>0$. Section~\ref{sec:setup} presents each graph to a model
as a sublist $\Ep\subseteq E$ of size $\lfloor\rho|E|\rfloor$ for a
fixed keep ratio $\rho$, together with the withheld set
$\Ew=E\setminus\Ep$, and asks the model to return the complete edge
list. The set $E'$ is the model output. Community graphs are drawn
from a stochastic block model \cite{Holland:1983} with $c$ blocks of
equal size $s=n/c$, within block probability $\pin$ and between block
probability $\pout$; scale-free graphs use the preferential attachment
model with attachment parameter $m$.

\subsection{The sandwich bound}
\label{sec:sandwich}

\begin{theorem}[Spectral sandwich]
\label{thm:sandwich}
For all $G,G'$ on a common vertex set of size $n$,
\begin{equation}
\frac{2}{n}\bigl||E|-|E'|\bigr|
\;\le\;\EMD(G,G')\;\le\;
\frac{2}{n}\bigl|E\symdiff E'\bigr|.
\label{eq:sandwich}
\end{equation}
The lower bound equals $\frac{2|E|}{n}\bigl|1-\ECR\bigr|$, so it is
computable from the scalar summaries alone.
\end{theorem}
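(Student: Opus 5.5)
The two inequalities in \eqref{eq:sandwich} are handled separately. Both rest on the identity $\EMD(G,G')=\frac1n\sum_i|\delta_i|$ from \eqref{eq:emd} and on the perturbation decomposition $\Delta=P-M$ of \eqref{eq:delta}.

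For the lower bound we use traces. Since $\tr L=\sum_v\deg(v)=2|E|$ and likewise $\tr L'=2|E'|$, we get $\sum_i\delta_i=\tr L'-\tr L=2(|E'|-|E|)$. The triangle inequality then gives $\sum_i|\delta_i|\ge\bigl|\sum_i\delta_i\bigr|=2\bigl||E'|-|E|\bigr|$. Dividing by $n$ yields the left inequality. Rewriting it as $\frac{2|E|}{n}|1-\ECR|$ is immediate from $|E'|=\ECR\,|E|$, so no work is needed there.

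For the upper bound we use the Mirsky-type inequality for Hermitian matrices with eigenvalues sorted in the same order \cite{Mirsky:1960,Bhatia:1997}. Applied with the trace norm (Lidskii/Mirsky for unitarily invariant norms), it gives
\[
\sum_{i=1}^n|\lambda'_i-\lambda_i|\le\|L'-L\|_*=\|\Delta\|_* .
\]
We then bound $\|\Delta\|_*$ by the triangle inequality. Because $P,M\succeq0$, their trace norms equal their traces, so $\|\Delta\|_*\le\|P\|_*+\|M\|_*=\tr P+\tr M$. Each rank-one term contributes $\tr(b_eb_e^\top)=2$, hence $\tr P=2a$ and $\tr M=2d$. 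This gives $\sum_i|\delta_i|\le2(a+d)=2|E\symdiff E'|$, and dividing by $n$ finishes the proof.

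The main obstacle is making sure the trace-norm version of the Mirsky inequality is the one we invoke. The classical statement often appears for the Frobenius norm or for arbitrary unitarily invariant norms via majorization. We need the Ky Fan $n$-norm (trace norm) case, where the sorted eigenvalue difference vector is weakly majorized by the eigenvalues of $\Delta$ (Lidskii). If a citation in exactly that form is not convenient, we would fall back on a direct argument using Weyl's inequalities. Write $L'=L+P-M$. Adding the positive semidefinite matrices one at a time moves every sorted eigenvalue up, and removing them moves every sorted eigenvalue down, with the total movement of each rank-one step equal to its trace, namely 2. Summing $|\delta_i|$ along this path of $a+d$ monotone steps and applying the triangle inequality reproduces the same bound.
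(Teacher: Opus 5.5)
Your proof is correct and matches the paper's: the trace identity $\tr L'-\tr L=2(|E'|-|E|)$ plus the triangle inequality for the lower bound, and Mirsky's trace-norm inequality followed by the trace-norm triangle inequality over the rank-one terms $b_eb_e^\top$ for the upper bound. Your Weyl-path fallback is also valid, since each rank-one step moves every sorted eigenvalue in the same direction with total movement $2$; it would make the upper bound self-contained without citing Mirsky.
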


\begin{proof}
For the lower bound, $\tr L=\sum_{u\in V}\deg(u)=2|E|$ and likewise
$\tr L'=2|E'|$, so
\begin{equation*}
\sum_{i}|\delta_i|\;\ge\;\Bigl|\sum_i\delta_i\Bigr|
=\bigl|\tr L'-\tr L\bigr|=2\bigl||E'|-|E|\bigr|.
\end{equation*}
For the upper bound, Mirsky's inequality for the trace norm
\cite{Mirsky:1960,Bhatia:1997} gives
$\sum_i|\delta_i|\le\|\Delta\|_*$. Applying the triangle inequality
for $\|\cdot\|_*$ to \eqref{eq:delta} and using
$\|b_eb_e^\top\|_*=\tr(b_eb_e^\top)=2$ yields
$\|\Delta\|_*\le2|E\symdiff E'|$. Dividing by $n$ and rewriting
$\bigl||E|-|E'|\bigr|=|E|\,|1-\ECR|$ completes the proof.
\end{proof}

\begin{remark}
\label{rem:lower}
Equality holds on the left of \eqref{eq:sandwich} if and only if the
nonzero differences $\delta_i$ all carry the same sign, that is, if
and only if one spectrum dominates the other pointwise. Pointwise
dominance follows from a one-sided edit by Weyl monotonicity, but the
converse fails; Example~\ref{ex:k4} exhibits a mixed edit whose
perturbation is positive semidefinite.
\end{remark}

\begin{corollary}[One-sided collapse]
\label{cor:collapse}
If $E\subseteq E'$ or $E'\subseteq E$, then
\begin{equation}
\EMD(G,G')=\frac{2}{n}\bigl|E\symdiff E'\bigr|
=\frac{2}{n}\bigl||E|-|E'|\bigr|,
\label{eq:collapse}
\end{equation}
so both bounds of \eqref{eq:sandwich} are attained at once. The two
bounds coincide if and only if the edit is one-sided, since
$|a-d|=a+d$ exactly when $\min(a,d)=0$. In this regime $\EMD$ is a
rescaled edge count and carries no information about the edit beyond
its cardinality.
\end{corollary}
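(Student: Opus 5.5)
The plan is to get the corollary directly from Theorem~\ref{thm:sandwich}, using only the counts $a=|E'\setminus E|$ and $d=|E\setminus E'|$ from Section~\ref{sec:notation}. No further spectral argument should be needed. The containment hypothesis $E\subseteq E'$ gives $d=0$, and $E'\subseteq E$ gives $a=0$. In both cases $\min(a,d)=0$. Since $|E\symdiff E'|=a+d$ and $\bigl||E'|-|E|\bigr|=|a-d|$, the two ends of \eqref{eq:sandwich} become $\frac2n|a-d|$ and $\frac2n(a+d)$. These are equal when one of $a,d$ vanishes. The sandwich then forces $\EMD(G,G')$ to equal their common value, which is \eqref{eq:collapse}.

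For the characterization, I will note that for nonnegative integers $|a-d|\le a+d$ always holds. Equality holds iff $\min(a,d)=0$: squaring gives $(a+d)^2-(a-d)^2=4ad$, which is zero iff $ad=0$. So the bounds coincide iff the edit is one-sided. This is exactly the same condition as the containment hypothesis, because $d=0\iff E\subseteq E'$ and $a=0\iff E'\subseteq E$.

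The closing interpretive claim is that $\EMD$ then carries no information beyond the cardinality. It follows because \eqref{eq:collapse} expresses $\EMD$ as a function of $n$ and $|E\symdiff E'|$ alone. Two one-sided edits of the same size on the same $n$ therefore receive identical distortion, whichever edges were touched.

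As a sanity check independent of the theorem, I could also verify the equality via Remark~\ref{rem:lower}. If $E\subseteq E'$, then $\Delta=P\succeq0$, and Weyl monotonicity gives $\delta_i\ge0$ for all $i$. Hence $\sum_i|\delta_i|=\tr\Delta=2a$, and symmetrically for deletion. There is no real obstacle here. The only point needing care is stating the iff correctly: it concerns coincidence of the two bounds, not equality $\EMD=$ lower bound, which by Remark~\ref{rem:lower} can also occur for mixed edits.
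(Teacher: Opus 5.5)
Your proof is correct, but your primary argument differs from the paper's. The paper argues directly: for a pure addition $\Delta=P\succeq0$, Weyl monotonicity gives $\delta_i\ge0$ for every $i$, so $\sum_i|\delta_i|=\tr\Delta=2a$. It then observes that this value equals both ends of \eqref{eq:sandwich}. That is exactly the argument you relegate to a sanity check.

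Your main route instead squeezes $\EMD$ between the two ends of Theorem~\ref{thm:sandwich} once $\min(a,d)=0$. Given the theorem, this is purely arithmetic, and it makes the clause ``both bounds attained at once'' immediate. It also yields the sign pattern as a by-product without invoking Weyl: the squeeze gives $\sum_i|\delta_i|=2a=\sum_i\delta_i$, hence $\delta_i\ge0$ for all $i$.

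The cost is that your route depends on the upper bound, and hence on Mirsky's inequality, which is a deeper perturbation result than Weyl monotonicity. The paper's argument needs only Weyl and the trace identity. It also establishes pointwise dominance as a standalone fact, which is reused in Remark~\ref{rem:lower} and Proposition~\ref{prop:crossing}(i).

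Your handling of the ``if and only if'' via $(a+d)^2-(a-d)^2=4ad$ matches the paper's arithmetic. So does your caution that it concerns coincidence of the two bounds, not attainment of the lower bound alone.
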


\begin{proof}
For a pure addition $\Delta=P\succeq0$, so Weyl monotonicity gives
$\delta_i\ge0$ for every $i$ and
$\sum_i|\delta_i|=\sum_i\delta_i=\tr\Delta=2a$, which equals both
$2|E\symdiff E'|$ and $2\bigl||E'|-|E|\bigr|$. Pure deletion is
symmetric. The second claim is the stated arithmetic identity.
\end{proof}

Corollary~\ref{cor:collapse} shows that the sandwich is sharp: its two
bounds meet on an entire class of edits, and that class is exactly the
one-sided one. It also identifies the regime in which $\EMD$ is
uninformative about mechanism. The next result converts the residual
between $\EMD$ and the lower bound into a certificate for the
complementary regime.

\begin{corollary}[Mixing certificate]
\label{cor:cert}
Define the trace excess
\begin{equation}
X=\EMD(G,G')-\frac{2|E|}{n}\bigl|1-\ECR\bigr|\;\ge\;0.
\label{eq:excess}
\end{equation}
Then
\begin{equation}
\min(a,d)\;\ge\;\frac{n}{4}\,X .
\label{eq:cert}
\end{equation}
In particular $X>0$ proves that the reconstruction both hallucinated
and lost edges. Once $\EMD$, $\ECR$, $|E|$ and $n$ are available,
computing $X$ requires no access to the identities of the
reconstructed edges.
\end{corollary}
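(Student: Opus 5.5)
Multiplying the definition of $X$ in \eqref{eq:excess} by $n$ gives $nX=\sum_i|\delta_i|-2|a-d|$, using $\bigl||E|-|E'|\bigr|=|a-d|$ and the trace identity from the proof of Theorem~\ref{thm:sandwich}. Nonnegativity of $X$ is just the lower bound of Theorem~\ref{thm:sandwich}, so the claim to prove is
\begin{equation*}
\sum_i|\delta_i|\;\le\;2|a-d|+4\min(a,d).
\end{equation*}

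The upper bound of Theorem~\ref{thm:sandwich} gives $\sum_i|\delta_i|\le 2(a+d)$. Without loss of generality take $a\ge d$. Then
\begin{equation*}
2(a+d)=2(a-d)+4d=2|a-d|+4\min(a,d),
\end{equation*}
and the case $d>a$ is symmetric. So the upper bound $\sum_i|\delta_i|\le 2|E\symdiff E'|$ is exactly the needed inequality. Subtracting $2|a-d|$ and dividing by $n$ gives $X\le \frac4n\min(a,d)$, which is \eqref{eq:cert}.

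The consequence follows immediately. If $X>0$, then $\min(a,d)\ge \frac n4 X>0$, so both $a\ge1$ and $d\ge1$ and the edit is mixed. Equivalently, Corollary~\ref{cor:collapse} gives $X=0$ for every one-sided edit. The computability remark holds because $X$ is defined in \eqref{eq:excess} purely through $\EMD$, $\ECR$, $|E|$ and $n$.

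There is no real obstacle here. The only point needing care is the bookkeeping identity $a+d=|a-d|+2\min(a,d)$, together with expressing the lower bound as $\frac{2|E|}{n}|1-\ECR|=\frac2n|a-d|$. Both are elementary, so the certificate is simply a rearrangement of the sandwich \eqref{eq:sandwich}.
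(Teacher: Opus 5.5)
Your proposal is correct and follows the paper's proof: both bound $X$ by the width of the sandwich interval, $\frac{2}{n}\bigl[(a+d)-|a-d|\bigr]=\frac{4}{n}\min(a,d)$, and read off $\min(a,d)\ge1$ when $X>0$. The paper also records the identity $nX=2\min(S_+,S_-)$, which interprets $X$ as cancelled spectral motion but is not needed for the bound.
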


\begin{proof}
By Theorem~\ref{thm:sandwich}, $X$ is at most the width of the
interval in \eqref{eq:sandwich}, namely
$\frac{2}{n}\bigl[(a+d)-|a-d|\bigr]=\frac{4}{n}\min(a,d)$.
Equivalently, writing $S_+=\sum_{\delta_i>0}\delta_i$ and
$S_-=-\sum_{\delta_i<0}\delta_i$, we have $\sum_i|\delta_i|=S_++S_-$
and $\bigl|\sum_i\delta_i\bigr|=|S_+-S_-|$, so
$nX=S_++S_--|S_+-S_-|=2\min(S_+,S_-)$. Thus $X$ is the normalized
amount of spectral motion that is cancelled by motion in the opposite
direction.
\end{proof}

\begin{definition}[Edit classes]
\label{def:classes}
A reconstruction is \emph{one-sided} if $\min(a,d)=0$,
\emph{dominant mixed} if $\min(a,d)\ge1$ and $X=0$, and
\emph{certified mixed} if $X>0$. The three classes partition all
reconstructions, and membership in the third is decidable from the
scalar summaries alone.
\end{definition}

\begin{remark}[Conservativeness]
\label{rem:tight}
Equality in \eqref{eq:cert} forces $\EMD$ to attain the upper bound of
\eqref{eq:sandwich}, which by Proposition~\ref{prop:upper} requires
every added edge to be vertex disjoint from every deleted edge. The
certificate is therefore expected to be conservative whenever added
and deleted edges overlap. It lower-bounds the smaller edit side but
need not recover its true size.
\end{remark}

\subsection{Attainment of the upper bound}
\label{sec:upper}

\begin{lemma}[Splitting]
\label{lem:split}
Let $P,M\succeq0$ be symmetric. Then $\|P-M\|_*=\tr P+\tr M$ if and
only if $PM=0$.
\end{lemma}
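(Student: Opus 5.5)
The plan is to prove the two directions separately, writing $\Delta=P-M$ and working throughout with the spectral decomposition of the Hermitian matrix $\Delta$.

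For the \emph{if} direction, assume $PM=0$. Transposing gives $MP=(PM)^\top=0$, so $P$ and $M$ commute and can be diagonalized in a common orthonormal basis. In that basis $PM=0$ forces the diagonal entries $p_i,m_i\ge0$ to satisfy $p_im_i=0$ for every $i$. The eigenvalues of $P-M$ are then $p_i-m_i$, and $|p_i-m_i|=p_i+m_i$. Summing over $i$ gives $\|P-M\|_*=\tr P+\tr M$.

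For the \emph{only if} direction, I will use the variational characterization $\|\Delta\|_*=\max\{\tr(U\Delta): \|U\|_{\mathrm{op}}\le1,\ U=U^\top\}$, which holds for symmetric $\Delta$ and is attained at $U=\Pi_+-\Pi_-$. Here $\Pi_\pm$ are the spectral projections onto the positive and negative eigenspaces of $\Delta$. This gives
\begin{equation*}
\|\Delta\|_*=\tr(U P)-\tr(U M),
\end{equation*}
and since $-I\preceq U\preceq I$ with $P,M\succeq0$, we have $\tr(UP)\le\tr P$ and $-\tr(UM)\le\tr M$. Equality with $\tr P+\tr M$ therefore forces $\tr((I-U)P)=0$ and $\tr((I+U)M)=0$.

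The key step is converting these trace identities into operator identities. Because $I-U\succeq0$ and $P\succeq0$, we can write $\tr((I-U)P)=\|(I-U)^{1/2}P^{1/2}\|_F^2$. Its vanishing gives $(I-U)P=0$, so $UP=P$; likewise $UM=-M$. Now $I-U=2\Pi_-+2\Pi_0$ and $I+U=2\Pi_++2\Pi_0$, where $\Pi_0$ projects onto $\ker\Delta$. Hence the range of $P$ lies in $\ran\Pi_+$ and the range of $M$ lies in $\ran\Pi_-$. These subspaces are orthogonal, so $M P=M\Pi_-\Pi_+P=0$, and transposing gives $PM=0$.

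The main obstacle is exactly this passage from equality of traces to the vanishing of $(I\mp U)P$ and $(I\pm U)M$. The trap is the kernel of $\Delta$: the argument must show that $P$ has no component in $\ker\Delta$, not merely no component in the negative eigenspace. The explicit identities $I-U=2\Pi_-+2\Pi_0$ and $I+U=2\Pi_++2\Pi_0$ handle this cleanly. They also avoid appealing to an abstract duality theorem, since the choice $U=\Pi_+-\Pi_-$ directly satisfies $\tr(U\Delta)=\|\Delta\|_*$.
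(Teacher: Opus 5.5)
Your proof is correct and is essentially the paper's argument. The paper reduces the hypothesis to $\tr\Delta_+=\tr P$ and compresses by the projection $\Pi=\Pi_+$ onto $\ran\Delta_+$, so the deficit splits into the PSD traces $\tr\bigl((I-\Pi)P(I-\Pi)\bigr)$ and $\tr(\Pi M\Pi)$. Your certificate $U=\Pi_+-\Pi_-$ yields twice that same deficit, split symmetrically, which gives the slightly stronger conclusion $\ran M\subseteq\ran\Pi_-$ rather than just $M\Pi_+=0$.

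One slip: $I-U=2\Pi_-+\Pi_0$ and $I+U=2\Pi_++\Pi_0$, since the kernel projection has coefficient $1$, not $2$. This is harmless, because you only use $\ker(I-U)=\ran\Pi_+$ and $\ker(I+U)=\ran\Pi_-$.
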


\begin{proof}
If $PM=0$ then $\ran M\subseteq\ker P$ and $\ran P\subseteq\ker M$, so
$P-M$ acts as $P$ and as $-M$ on orthogonal subspaces and its trace
norm is $\tr P+\tr M$.

Conversely, write $\Delta=P-M$ in its Jordan decomposition
$\Delta=\Delta_+-\Delta_-$ with $\Delta_\pm\succeq0$ supported on
orthogonal subspaces, so $\|\Delta\|_*=\tr\Delta_++\tr\Delta_-$.
Together with $\tr\Delta_+-\tr\Delta_-=\tr\Delta=\tr P-\tr M$, the
hypothesis gives $\tr\Delta_+=\tr P$ and $\tr\Delta_-=\tr M$. Let
$\Pi$ be the orthogonal projection onto $\ran\Delta_+$. Orthogonality
of the supports gives $\Delta_-\Pi=0$ and hence
$\Pi\Delta\Pi=\Delta_+$, so
\begin{align*}
\tr P=\tr\Delta_+
&=\tr(\Pi P\Pi)-\tr(\Pi M\Pi)\\
&=\tr P-\tr\bigl((I{-}\Pi)P(I{-}\Pi)\bigr)-\tr(\Pi M\Pi).
\end{align*}
Both subtracted terms are traces of positive semidefinite matrices and
must vanish. From $\tr(\Pi M\Pi)=\|M^{1/2}\Pi\|_F^2=0$ we get
$M\Pi=0$, and from
$\tr\bigl((I{-}\Pi)P(I{-}\Pi)\bigr)=\|P^{1/2}(I{-}\Pi)\|_F^2=0$ we get
$P=\Pi P\Pi$. Hence $P$ is supported in $\ran\Pi$ while $M$
annihilates it, so $PM=0$.
\end{proof}

\begin{proposition}[Necessary condition for upper attainment]
\label{prop:upper}
Suppose $\EMD(G,G')=\frac{2}{n}|E\symdiff E'|$ and the edit is mixed.
Then every added edge is vertex disjoint from every deleted edge.
\end{proposition}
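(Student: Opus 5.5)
The plan is to push the hypothesis of attainment through the chain of inequalities in the proof of Theorem~\ref{thm:sandwich} and read off equality at each link. We have
\[
2|E\symdiff E'| \;=\; \sum_i|\delta_i| \;\le\; \|\Delta\|_* \;\le\; \tr P+\tr M \;=\; 2(a+d) .
\]
Under the hypothesis the two ends agree, so every inequality in the chain is an equality. In particular $\|P-M\|_*=\tr P+\tr M$. Here $P,M\succeq0$ are the symmetric matrices of \eqref{eq:delta}, so Lemma~\ref{lem:split} gives $PM=0$. Mirsky's step is not needed beyond this; we only use that the outer quantities coincide.

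The second step converts $PM=0$ into a combinatorial statement. Since $P$ and $M$ are positive semidefinite, $PM=0$ is equivalent to $\ran P\perp\ran M$. Indeed, $PM=0$ gives $\tr(PM)=0$. Conversely, $\tr(PM)=\|P^{1/2}M^{1/2}\|_F^2$, so vanishing trace forces $P^{1/2}M^{1/2}=0$ and hence $PM=0$.

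Next, $\ran P=\operatorname{span}\{b_e: e\in E'\setminus E\}$ and $\ran M=\operatorname{span}\{b_f: f\in E\setminus E'\}$, since a sum of positive semidefinite rank-one terms has range equal to the span of the generating vectors. Orthogonality of the ranges therefore gives $b_e^\top b_f=0$ for every added edge $e$ and every deleted edge $f$. Alternatively, one can expand
\[
\tr(PM)=\sum_{e,f}(b_e^\top b_f)^2 ,
\]
a sum of nonnegative terms, and conclude the same thing directly. This expansion is the cleanest route, and I would use it.

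The final step, which is the only place requiring care, is the sign computation. Take $e\neq f$, which holds because $E'\setminus E$ and $E\setminus E'$ are disjoint. For two distinct edges of a simple graph, $b_e^\top b_f=\pm1$ if they share exactly one vertex and $0$ if they are vertex disjoint; they cannot share two vertices, since they would then be the same edge. Hence each vanishing inner product forces the edges to be vertex disjoint.

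The mixed hypothesis only guarantees that both sets are nonempty, so that the conclusion is not vacuous. The step I expect to be the main obstacle is justifying that attainment of the EMD bound really forces equality in the trace-norm triangle inequality rather than merely in Mirsky's inequality. This follows from sandwiching, as the displayed chain shows.
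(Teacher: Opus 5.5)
Your proof is correct and matches the paper's argument step for step. Both use equality throughout $\sum_i|\delta_i|\le\|\Delta\|_*\le 2(a+d)$ to get $\|P-M\|_*=\tr P+\tr M$, then Lemma~\ref{lem:split} for $PM=0$, then the expansion $\tr(PM)=\sum_{e,f}(b_e^\top b_f)^2$ with $b_e^\top b_f\in\{0,\pm1\}$ for distinct edges.
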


\begin{proof}
Attainment forces equality throughout
$\sum_i|\delta_i|\le\|\Delta\|_*\le2(a+d)$, in particular
$\|P-M\|_*=\tr P+\tr M$, so $PM=0$ by Lemma~\ref{lem:split}.
Since $\tr(PM)=\|P^{1/2}M^{1/2}\|_F^2$, this is equivalent to
$\tr(PM)=0$, and expanding \eqref{eq:delta} gives
\begin{equation*}
\tr(PM)=\sum_{e\in E'\setminus E}\ \sum_{f\in E\setminus E'}
\bigl(b_e^\top b_f\bigr)^2 .
\end{equation*}
The edges $e$ and $f$ are always distinct, so
$b_e^\top b_f\in\{0,\pm1\}$, and it is nonzero exactly when $e$ and
$f$ share a vertex. The sum vanishes if and only if no added edge
meets a deleted edge.
\end{proof}

\begin{figure}[t]
\centering
\includegraphics[width=\columnwidth]{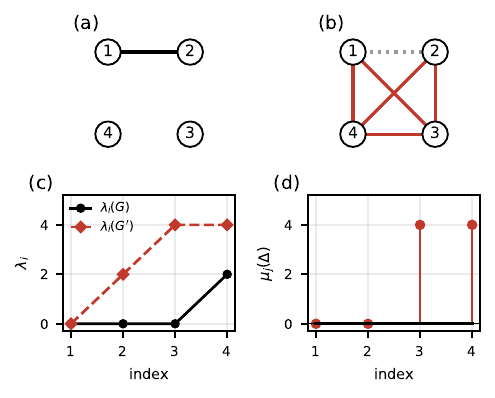}
\caption{The mixed edit of Example~\ref{ex:k4}. (a)~$G$. (b)~$G'=K_4$
minus $\{1,2\}$; hallucinated edges solid red, deleted edge dotted
gray. (c)~Pointwise dominance without a crossing, attaining the lower
bound of \eqref{eq:sandwich}. (d)~The perturbation spectrum is
nonnegative although the edit is mixed.}
\label{fig:k4}
\end{figure}

\begin{example}[Mixed edit attaining the lower bound]
\label{ex:k4}
Let $n=4$, $E=\{\{1,2\}\}$ and $E'=E(K_4)\setminus\{\{1,2\}\}$, so
$a=5$ and $d=1$. The spectra are $\{0,0,0,2\}$ and $\{0,2,4,4\}$,
hence $\EMD=2$, the lower bound $\frac24|5-1|=2$ is attained and the
upper bound $\frac24\cdot6=3$ is strict. Here
$\Delta=L(K_4)-2\,b_{12}b_{12}^\top$ has spectrum $\{0,0,4,4\}$: a
mixed edit can produce a positive semidefinite perturbation and hence
pointwise spectral dominance. This is why the certificate of
Corollary~\ref{cor:cert} is one directional. Consistently with
Proposition~\ref{prop:upper}, the upper bound is not attained, since
the added edge $\{1,3\}$ meets the deleted edge $\{1,2\}$ at vertex
$1$. Figure~\ref{fig:k4} displays the construction.
\end{example}

\begin{example}[Isospectral edit]
\label{ex:iso}
Let $n=4$, $E=\{\{1,2\},\{2,3\},\{1,4\}\}$ and
$E'=\{\{2,3\},\{3,4\},\{1,4\}\}$, so $a=d=1$ and the added edge
$\{3,4\}$ is vertex disjoint from the deleted edge $\{1,2\}$. Both
graphs are paths on four vertices, so their spectra coincide,
$G$ is connected with $\lambda_2=2-\sqrt2>0$, and every summary is
blind: $\EMD=0$ while $|E\symdiff E'|=2$, and $X=0$, $\ECR=1$,
$\ACR=1$, with no crossing in the sense of
Proposition~\ref{prop:crossing}. Together with Example~\ref{ex:k4}
this brackets the failure modes of spectral diagnostics, a mixed edit
presenting either as dominance or as the identity. It also shows that
the vertex disjointness of Proposition~\ref{prop:upper} is necessary
but not sufficient for upper attainment, since this disjoint edit
attains the lower bound instead.
\end{example}

\begin{example}[Upper attainment and a tight certificate]
\label{ex:star}
Let $G$ be the star $K_{1,3}$ with center $1$, and let the
reconstruction delete the leaf edge $\{1,2\}$ and add the edge
$\{3,4\}$ between two leaves, so $a=d=1$ and $|E'|=|E|$. The spectra
are $\{0,1,1,4\}$ and $\{0,0,3,3\}$, giving $\EMD=1$, which attains
the upper bound $\frac{2}{4}\cdot2=1$. The added and deleted edges
are vertex disjoint, as required by Proposition~\ref{prop:upper}. Here the net change is zero, so the
lower bound is $0$ and the trace excess is $X=1$; the certificate of
Corollary~\ref{cor:cert} returns $\lceil nX/4\rceil=1=\min(a,d)$ and
is exact. This shows that the constant $\tfrac14$ in \eqref{eq:cert}
cannot be improved.
\end{example}

\subsection{Spectral order and local diagnostics}
\label{sec:order}

\begin{proposition}[Order and crossings]
\label{prop:crossing}
Let $\delta_i=\lambda'_i-\lambda_i$.
\begin{enumerate}
\item[(i)] If $E\subseteq E'$ then $\delta_i\ge0$ for all $i$; if
$E'\subseteq E$ then $\delta_i\le0$ for all $i$.
\item[(ii)] If $E'=E\cup\{e\}$ then
$\lambda_i\le\lambda'_i\le\lambda_{i+1}$ for $i<n$,
$\lambda_n\le\lambda'_n\le\lambda_n+2$, and $\sum_i\delta_i=2$.
\item[(iii)] $X>0$ if and only if there exist indices $i,j$ with
$\delta_i>0>\delta_j$.
\end{enumerate}
Consequently a crossing of the two sorted spectral curves is a sound
certificate of a mixed edit, equivalent to $X>0$; the converse
implication from mixing to crossing fails by Example~\ref{ex:k4}.
\end{proposition}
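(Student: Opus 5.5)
Part (i) follows from Weyl monotonicity applied to the perturbation \eqref{eq:delta}. If $E\subseteq E'$ then $\Delta=P\succeq0$, and the Courant--Fischer characterization gives $\lambda_i(L+P)\ge\lambda_i(L)$ for every $i$, so $\delta_i\ge0$. The case $E'\subseteq E$ is the same argument with the roles of $G$ and $G'$ exchanged, giving $\delta_i\le0$. This is the step already used in the proof of Corollary~\ref{cor:collapse}, so I will simply invoke it.

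For part (ii), set $\Delta=b_eb_e^\top$, a rank-one positive semidefinite matrix with $\tr\Delta=\|b_e\|^2=2$.
\begin{itemize}
\item The lower inequalities $\lambda_i\le\lambda'_i$ are part (i).
\item The interlacing upper bound $\lambda'_i\le\lambda_{i+1}$ for $i<n$ follows from Courant--Fischer. In the min--max formula for $\lambda'_i$, restrict to subspaces contained in $b_e^\perp$. Concretely, take the span $U$ of the eigenvectors of $L$ for $\lambda_1,\dots,\lambda_{i+1}$. Then $U\cap b_e^\perp$ has dimension at least $i$, and on it the quadratic forms of $L'$ and $L$ agree. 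Hence $\lambda'_i\le\max_{x\in U\cap b_e^\perp,\|x\|=1}x^\top Lx\le\lambda_{i+1}$.
\item The trace identity is $\sum_i\delta_i=\tr L'-\tr L=\tr\Delta=2$.
\item Since every $\delta_i\ge0$, we get $\delta_n\le\sum_i\delta_i=2$, which is the bound $\lambda'_n\le\lambda_n+2$.
\end{itemize}
This part is routine.

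Part (iii) reuses the identity from the proof of Corollary~\ref{cor:cert}, namely $nX=S_++S_--|S_+-S_-|=2\min(S_+,S_-)$, where $S_\pm$ are the total positive and negative spectral motion.
\begin{itemize}
\item $X>0$ holds if and only if both $S_+>0$ and $S_->0$.
\item $S_+>0$ holds if and only if some $\delta_i>0$, and $S_->0$ holds if and only if some $\delta_j<0$.
\end{itemize}
Together these give the equivalence. The only care needed is that this identity rests on the monotone-coupling formula \eqref{eq:emd} and on the lower bound of Theorem~\ref{thm:sandwich} being exactly $\frac1n|\sum_i\delta_i|$. Both are already established.

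For the concluding sentence, soundness works as follows. A crossing means $\delta_i>0>\delta_j$, which by (iii) gives $X>0$. Then Corollary~\ref{cor:cert} gives $\min(a,d)\ge nX/4>0$, so the edit is mixed. Equivalently, by contraposition of (i), a one-sided edit cannot produce signs of both kinds. For the failure of the converse, cite Example~\ref{ex:k4}: there the edit is mixed ($a=5$, $d=1$), yet the spectra $\{0,0,0,2\}$ and $\{0,2,4,4\}$ give $\delta=(0,2,4,2)\ge0$, so there is no crossing. I expect no real obstacle. The only step needing a genuine argument is the interlacing bound in (ii), and the Courant--Fischer restriction to $b_e^\perp$ handles it cleanly.
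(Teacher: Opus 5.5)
Your proposal is correct and follows the paper's proof: Weyl monotonicity for (i), rank-one interlacing for (ii), and the identity $nX=2\min(S_+,S_-)$ from Corollary~\ref{cor:cert} for (iii), with soundness from (i) and the converse refuted by Example~\ref{ex:k4}. The only difference is one of detail: you write out the Courant--Fischer restriction to $b_e^\perp$ and obtain $\lambda'_n\le\lambda_n+2$ from the trace identity, whereas the paper simply cites Weyl interlacing for a rank-one update of trace $2$.
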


\begin{proof}
Part (i) is Weyl monotonicity for $\Delta=P\succeq0$, respectively
$\Delta=-M\preceq0$. Part (ii) is the Weyl interlacing inequality for
a rank one positive semidefinite update of trace $2$. For (iii), the
identity $nX=2\min(S_+,S_-)$ from Corollary~\ref{cor:cert} shows that
$X>0$ if and only if both positive and negative spectral shifts occur.
A one-sided edit has only one sign by (i).
\end{proof}

\begin{proposition}[Connectivity diagnostics]
\label{prop:acr}
Let $G$ be connected, so $\lambda_2>0$.
\begin{enumerate}
\item[(i)] If $E\subseteq E'$ then $\ACR\ge1$, and if $E'\subseteq E$
then $\ACR\le1$. Hence $\ACR<1$ certifies $d\ge1$ and $\ACR>1$
certifies $a\ge1$.
\item[(ii)] If $\ECR\ge1$ and $\ACR<1$, or $\ECR\le1$ and $\ACR>1$,
then the edit is mixed and $X>0$. The pair $(\ECR,\ACR)$ is thus a
coarsening of Corollary~\ref{cor:cert} that localizes a crossing at
the Fiedler index.
\item[(iii)] For every $S$ with $\emptyset\ne S\subsetneq V$,
\begin{equation*}
\lambda'_2\le\frac{n\,|\partial_{G'}S|}{|S|\,(n-|S|)},
\end{equation*}
where $\partial_{G'}S$ is the set of edges of $G'$ leaving $S$. In
particular $\ACR=0$ if and only if $G'$ is disconnected.
\end{enumerate}
\end{proposition}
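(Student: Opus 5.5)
Part (i) is a direct restriction of Proposition~\ref{prop:crossing}(i) to the Fiedler index. If $E\subseteq E'$, then $\Delta=P\succeq0$, and Weyl monotonicity gives $\lambda'_2\ge\lambda_2$. Since $\lambda_2>0$, dividing yields $\ACR\ge1$. If $E'\subseteq E$, the same argument with $\Delta=-M\preceq0$ gives $\ACR\le1$. The certificates are the contrapositives. If $\ACR<1$, the edit cannot be a pure addition, and since $\ACR\ne1$ it is not the empty edit either, so $d\ge1$. Symmetrically, $\ACR>1$ excludes pure deletion and forces $a\ge1$.

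For (ii), take the case $\ECR\ge1$ and $\ACR<1$; the other case is symmetric.
\begin{enumerate}
\item[1.] From $\ACR<1$ we get $\delta_2<0$.
\item[2.] From $\ECR\ge1$ we get $\sum_i\delta_i=2(|E'|-|E|)\ge0$.
\item[3.] A negative term in a nonnegative sum forces some $\delta_j>0$.
\item[4.] Proposition~\ref{prop:crossing}(iii) then gives $X>0$.
\item[5.] Corollary~\ref{cor:cert} gives $\min(a,d)\ge nX/4>0$, so the edit is mixed.
\end{enumerate}
The phrase ``localizes a crossing at the Fiedler index'' records that one of the two opposite-sign indices can be taken to be $i=2$.

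For (iii), I would use the Courant--Fischer characterization
\[
\lambda'_2=\min_{x\perp\mathbf 1,\,x\ne0}\frac{x^\top L'x}{x^\top x},
\]
together with the test vector $x=(n-|S|)\mathbf 1_S-|S|\mathbf 1_{V\setminus S}$. This vector is orthogonal to $\mathbf 1$. Its Rayleigh quotient is computed as follows.
\begin{itemize}
\item Only edges of $G'$ crossing $S$ contribute to $x^\top L'x=\sum_{\{u,v\}\in E'}(x_u-x_v)^2$. Each crossing edge contributes $n^2$, so $x^\top L'x=n^2|\partial_{G'}S|$.
\item The norm is $x^\top x=|S|(n-|S|)^2+(n-|S|)|S|^2=n|S|(n-|S|)$.
\end{itemize}
The ratio is exactly the stated bound.

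For the final claim, suppose $G'$ is disconnected. Take $S$ to be a connected component; then $\partial_{G'}S=\emptyset$, so $\lambda'_2\le0$. Since $L'\succeq0$, this gives $\lambda'_2=0$ and hence $\ACR=0$. Conversely, suppose $\ACR=0$, so $\lambda'_2=0$. Then the multiplicity of the eigenvalue $0$ is at least two. That multiplicity equals the number of components, because $x^\top L'x=0$ forces $x$ to be constant on each component. Hence $G'$ is disconnected.

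None of these steps is a real obstacle. The only point needing care is in (ii): one must not conclude mixing from $\ACR<1$ alone. The argument must combine the sign of $\delta_2$ with the trace identity to obtain an opposite-sign index.
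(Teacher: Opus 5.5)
Your proposal is correct and follows the paper's proof essentially step for step. Part (i) is Weyl monotonicity at index $2$. Part (ii) combines the trace identity with an opposite-sign $\delta_2$ and then invokes Proposition~\ref{prop:crossing}(iii). Part (iii) uses the same cut test vector in the Rayleigh quotient. The only difference is that you prove $\ACR=0$ if and only if $G'$ is disconnected directly, taking a component as $S$ in one direction and using that the kernel dimension equals the number of components in the other, where the paper simply cites Fiedler; this is a harmless, self-contained refinement.
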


\begin{proof}
Part (i) is Proposition~\ref{prop:crossing}(i) at index $2$. For (ii),
assume $\ECR\ge1$ and $\ACR<1$. Then $\sum_i\delta_i=2(|E'|-|E|)\ge0$
while $\delta_2<0$. If every $\delta_i$ were nonpositive the sum would
force $\delta_i=0$ for all $i$, contradicting $\delta_2<0$; hence some
$\delta_j>0$ and Proposition~\ref{prop:crossing}(iii) applies. The
other case is symmetric. For (iii), take
$x=(n-|S|)\mathbf1_S-|S|\mathbf1_{V\setminus S}$. Then
$x\perp\mathbf1$, $x^\top x=n|S|(n-|S|)$, and only cut edges
contribute to $x^\top L'x=\sum_{\{u,v\}\in E'}(x_u-x_v)^2$, each by
$n^2$, so $x^\top L'x=n^2|\partial_{G'}S|$. The Rayleigh
characterization of $\lambda'_2$ gives the bound, and $\lambda'_2=0$
holds exactly when $G'$ is disconnected \cite{Fiedler:1973}.
\end{proof}

\subsection{The degenerate regime}
\label{sec:degenerate}

\begin{proposition}[Closed form under sublist return]
\label{prop:echo}
Suppose the reconstruction returns exactly the prompted sublist,
$E'=\Ep$ with $|\Ep|=\lfloor\rho|E|\rfloor$. Then the edit is one
sided and
\begin{equation}
\EMD(G,G')=\frac{2}{n}\bigl\lceil(1-\rho)|E|\bigr\rceil
=(1-\rho)\,\bar\kappa+O(1/n),
\label{eq:echoform}
\end{equation}
where $\bar\kappa=2|E|/n$ is the average degree of $G$. The value is
determined by $n$ and $|E|$ alone: it does not depend on the model, on
which edges were withheld, or on any structural property of $G$ beyond
its size and density.
\end{proposition}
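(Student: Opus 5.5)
Since $E'=\Ep\subseteq E$, the edit is a pure deletion: $a=0$ and $d=|E\setminus\Ep|=|\Ew|$. The plan is therefore to invoke Corollary~\ref{cor:collapse} directly, which gives $\EMD(G,G')=\frac{2}{n}|E\symdiff E'|=\frac{2}{n}d$, and then to compute $d$ and expand the result asymptotically.

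The only arithmetic step is the count $d=|E|-\lfloor\rho|E|\rfloor$. Since $|E|$ is an integer, $|E|-\lfloor\rho|E|\rfloor=\lceil|E|-\rho|E|\rceil=\lceil(1-\rho)|E|\rceil$, using the identity $m-\lfloor x\rfloor=\lceil m-x\rceil$ for integer $m$. This gives the first equality in \eqref{eq:echoform}.

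For the asymptotic form, I will write $\lceil(1-\rho)|E|\rceil=(1-\rho)|E|+\theta$ with $0\le\theta<1$. Multiplying by $2/n$ gives $(1-\rho)\cdot 2|E|/n+2\theta/n=(1-\rho)\bar\kappa+O(1/n)$, and the error term lies in $[0,2/n)$ uniformly in $\rho$ and $G$.

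The independence claims then follow by inspection. The closed form involves only $n$, $|E|$ and the fixed $\rho$. In particular it does not depend on which edges form $\Ew$, and it is the same for every model that echoes the sublist. No step here is a genuine obstacle; the one point needing care is the floor-to-ceiling conversion, which requires $|E|\in\mathbb{Z}$.
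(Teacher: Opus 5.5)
Your proposal is correct and matches the paper's proof: both apply the one-sided collapse (Corollary~\ref{cor:collapse}) to the pure deletion $E'=\Ep\subseteq E$ and use the identity $|E|-\lfloor\rho|E|\rfloor=\lceil(1-\rho)|E|\rceil$. You additionally spell out the asymptotic step: the error term $2\theta/n$ lies in $[0,2/n)$, which the paper leaves implicit.
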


\begin{proof}
Apply Corollary~\ref{cor:collapse} with
$|E|-\lfloor\rho|E|\rfloor=\lceil(1-\rho)|E|\rceil$.
\end{proof}

Proposition~\ref{prop:echo} has a direct consequence for benchmark
design. Along connected sparse families with average degree
$\bar\kappa=\Theta(1)$, including rectangular lattices and fixed-$m$
preferential-attachment graphs, the degenerate distortion is
$\Theta(1)$; for block models with fixed $\pin$ and growing block
size it grows with $\bar\kappa$. A
per family profile of $\EMD$ values can therefore reproduce the
density profile of the benchmark rather than any property of the
model under evaluation.

\section{Experimental Setup}
\label{sec:setup}

\subsection{Benchmark}

The corpus consists of 45 synthetic graphs in three families and
three size levels, five instances per cell. Synthetic graphs give
controlled generating families and exact ground truth for all
edge-level quantities; they do not make the withheld set uniquely
identifiable. Community graphs use an equal-block stochastic block
model \cite{Holland:1983} with $\pin=0.7$, $\pout=0.05$ and
$c=3,4,5$, giving $n=15,28,50$; two disconnected draws were
deterministically redrawn, so $\ACR$ is defined throughout. Scale
free graphs use preferential attachment \cite{Barabasi:1999} with
$m=2$ at $n=15,30,50$, giving $|E|=2(n-2)$. Grid graphs use five
pairwise non-isomorphic rectangular lattices per level, with $n$ in
$14$--$18$, $27$--$32$ and $45$--$50$.

\subsection{Reconstruction task}

Each graph is presented as a shuffled sublist $\Ep\subseteq E$ of size
$\lfloor\rho|E|\rfloor$ with keep ratio $\rho=0.75$, together with
three global descriptors: the number of vertices, the edge density and
the average degree. The model is instructed to return the complete
edge list, one edge per line, with no explanation. The sampling of
$\Ep$ uses a fixed seed, so the withheld set $\Ew=E\setminus\Ep$ is
identical across models and the completion problem is the same for
all of them.

We evaluate three instruction-tuned open-weight models at comparable
scale: Llama 3.1 8B \cite{Llama3:2024}, Mistral 7B
\cite{Mistral7B:2023} and Qwen 2.5 7B \cite{Qwen25:2024}. All are run
locally through Ollama \cite{Ollama} on a single Apple M4 machine with
temperature $0$ and no external API access, giving 135
reconstructions. Model output is parsed by fixing the vertex set to
$V$ and retaining valid undirected pairs of existing vertices, with
duplicates and self-loops removed. No other repair is applied:
deviations in edge count are the object of study rather than an
artifact to be corrected. Raw responses and parsed edge lists are
retained for every run, which permits all derived quantities to be
recomputed without re-querying the models.

Edge counts are reported after de-duplication. Across the 135
reconstructions the parser discarded 174, 334 and 1 candidate pairs
for Llama, Mistral and Qwen respectively, of which all but seven were
duplicate edges; out-of-range vertices and self-loops together account
for the remaining seven. The parser therefore removes almost no edges
beyond repeats, so the hallucination counts reflect model output
rather than post-processing. 

\subsection{Measurement}

For each graph we form the combinatorial Laplacian and compute its
full sorted spectrum. The distortion $\EMD$ is evaluated as the
Wasserstein distance between the two spectra viewed as equally
weighted samples of size $n$, which coincides with \eqref{eq:emd}. In
$\ACR$ we treat $\lambda'_2$ below $10^{-9}$ as exactly zero, so a
disconnected reconstruction reports $\ACR=0$ rather than a numerical
artifact.

From the retained edge lists we obtain $a=|E'\setminus E|$ and
$d=|E\setminus E'|$ directly, and hence the edit class of
Definition~\ref{def:classes}. The certificate of
Corollary~\ref{cor:cert} is evaluated as $\lceil nX/4\rceil$ and
compared against the true $\min(a,d)$. Since $E$ and $\Ep$ are both
known, one-sided reconstructions are further separated into those
returning exactly the prompted sublist, $E'=\Ep$, and those returning
a proper subset of it.

Completion behavior is measured relative to $\Ep$ and $\Ew$ by
\begin{align*}
\text{kept}&=|E'\cap\Ep|, &
\text{dropped}&=|\Ep\setminus E'|,\\
\text{recovered}&=|E'\cap\Ew|, &
\text{hallucinated}&=|E'\setminus E|,
\end{align*}
so that $a$ equals the number of hallucinated edges and $d$ equals
dropped plus $|\Ew|$ minus recovered. These identities were verified
on all 135 reconstructions.

\subsection{A null model for recovery}

A model that emits non-prompt edges without using structure would
still recover some withheld edges by chance, at a rate that grows with
the number of edges it emits. To separate signal from volume we
condition on that number. Let $r=\text{recovered}+\text{hallucinated}$
be the count of emitted edges outside $\Ep$, and let
$N=\binom{n}{2}-|\Ep|$ be the number of vertex pairs available to a
structure-blind model. Drawing $r$ of those $N$ pairs uniformly makes
the recovered count hypergeometric with mean $rq$ and variance
$rq(1-q)(N-r)/(N-1)$, where $q=|\Ew|/N$. Summing these moments over
instances gives the expectation and variance of total recovery under
the null, and we report the standardized deviation $z$ of the observed
total. The test asks whether a model's hits exceed chance given how
often it guessed, not whether it guessed often.

\subsection{Link prediction baselines}

To calibrate recovery against structure-aware heuristics, we compare
each model to three classical link predictors at the same emission
volume. For a model and instance, let $r$ be the number of edges the
model emitted outside $\Ep$. We rank all candidate pairs in
$\binom{V}{2}\setminus\Ep$ on the visible graph $(V,\Ep)$ by common
neighbors,
\begin{equation*}
s_{\mathrm{CN}}(u,v)=|\Gamma_{\mathrm p}(u)\cap\Gamma_{\mathrm p}(v)|,
\end{equation*}
Adamic--Adar \cite{AdamicAdar:2003},
\begin{equation*}
s_{\mathrm{AA}}(u,v)=\!\!\!\sum_{w\in\Gamma_{\mathrm p}(u)\cap
\Gamma_{\mathrm p}(v)}\!\!\!\frac{1}{\log\deg_{\mathrm p}(w)},
\end{equation*}
and preferential attachment \cite{LibenNowell:2007},
\begin{equation*}
s_{\mathrm{PA}}(u,v)=\deg_{\mathrm p}(u)\,\deg_{\mathrm p}(v),
\end{equation*}
where $\Gamma_{\mathrm p}$ and $\deg_{\mathrm p}$ are neighborhoods
and degrees in $(V,\Ep)$. Each baseline selects the top $r$ pairs.
Score ties are broken uniformly at random and the recovered count is
averaged over $1{,}000$ tie permutations, which matters on the grid
family where many pairs share a zero score. The random baseline is the
hypergeometric expectation $r\,|\Ew|/(\binom{n}{2}-|\Ep|)$ at the same
budget.

\section{Results}
\label{sec:results}

\subsection{Bounds and attainment}

Numerical evaluation agrees with \eqref{eq:sandwich} to tolerance
$10^{-6}$ on all 135 outputs. The empirical question is therefore
which equality regime each output occupies.
Table~\ref{tab:census} shows 77 one-sided outputs, all attaining both
bounds as required by Corollary~\ref{cor:collapse}: 57 exact sublist
returns and 20 proper subsets. For these $\EMD$ is a rescaled edge
count carrying no information about which edges changed. No output is a
pure addition. The remaining 58 are mixed; 29 attain the lower bound
and none the upper. For 57 of the 58, an added edge meets a deleted
edge, so Proposition~\ref{prop:upper} rules out upper bound
attainment. The exception is Llama on
grid g35, with $a=2$, $d=15$, $\EMD=0.867$ and upper bound $1.133$:
vertex disjoint yet below the upper bound, so the condition is
necessary but not sufficient. The largest dominant edit is Mistral on
g39, with $a=192$, $d=4$, $\ECR=5.48$ and $X=0$.

\begin{table}[t]
\caption{Attainment of \eqref{eq:sandwich} by edit class at tolerance
$10^{-6}$. The first two rows are one-sided; the last two are mixed.}
\label{tab:census}
\centering
\begin{tabular}{lrrr}
\toprule
Edit class & $N$ & Lower attained & Upper attained\\
\midrule
Sublist returned exactly & 57 & 57 & 57\\
Other one-sided & 20 & 20 & 20\\
Mixed, dominant & 29 & 29 & 0\\
Mixed, certified & 29 & 0 & 0\\
\midrule
Total & 135 & 106 & 77\\
\bottomrule
\end{tabular}
\end{table}

\subsection{Certificates and crossings}

The trace excess of \eqref{eq:excess} is positive on 29
reconstructions. All 29 are mixed, as guaranteed by
Corollary~\ref{cor:cert}; the stored edge lists provide an
implementation check, with $\lceil nX/4\rceil$ at most the true
$\min(a,d)$ on every one. The estimate is uniformly conservative: it
returns 1 on 24 instances, 2 on two, 3 on two and 5 on one, while the
true overlap ranges from 2 to 24.

The set of instances with $X>0$ coincides exactly with the set whose
sorted spectral curves cross, and no one-sided instance exhibits a
crossing, as required by Proposition~\ref{prop:crossing}(iii).
Figure~\ref{fig:portraits} shows this on three instances: the two
curves that cross are the two certified reconstructions in the panel
set, and the remaining seven track the original without a sign change.
The coarse test of Proposition~\ref{prop:acr}(ii) fires on 10 of the
29 and never outside the certified class, and $\ACR=0$ agrees with
disconnection of $G'$ with no mismatch across the corpus, an
implementation check for Proposition~\ref{prop:acr}(iii). Disconnection is common: 41 of the
135 reconstructions have $\ACR=0$, concentrated in the scale-free and
grid families with 24 and 14 instances against 3 for community
graphs. Preferential attachment and lattice graphs contain many
degree two vertices, which a withheld quarter of the edges can
isolate, whereas dense blocks are redundant enough to survive. The
counts also invert the ordering of edit volume: Qwen disconnects 19
reconstructions and Mistral only 9, consistent with the fact that
deletion can isolate vertices whereas additional edges can reconnect
components.

Two instances show what the certificate buys. On community instance
g5 the Llama reconstruction has $\ECR=1$, $\ACR=0.84$ and
$\EMD=0.16$. None of these scalars identifies the simultaneous 19
additions and 19 deletions; their joint residual $X=0.161$ does.
Scale-free instance g22 similarly preserves edge count while adding 15
edges and deleting 15, with the largest excess in the corpus at
$X=0.554$.

\begin{figure*}[t]
\centering
\includegraphics[width=\textwidth]{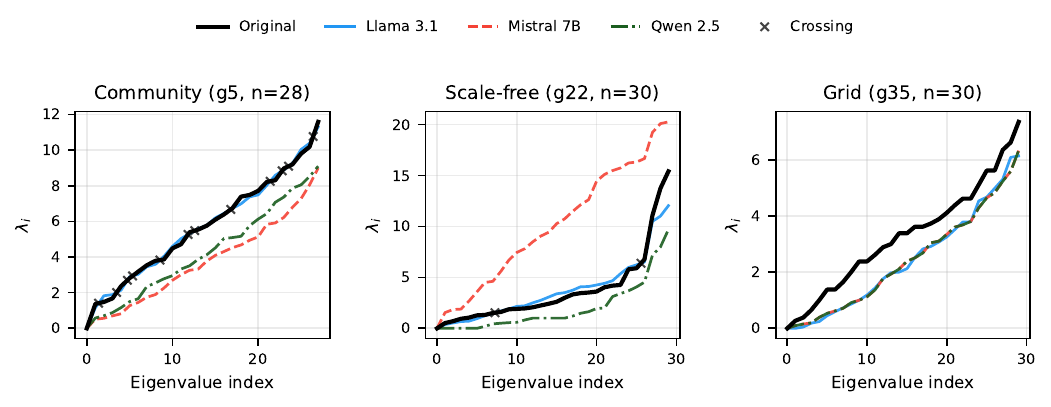}
\caption{Sorted Laplacian spectra for one medium instance per family;
crossings mark $X>0$ (Proposition~\ref{prop:crossing}). The two
crossing curves are the certified reconstructions of g5 and g22. In
g35 the Mistral and Qwen curves coincide (identical edge lists) while
the Llama curve is mixed yet spectrally dominated, the one corpus
instance whose added and deleted edges are vertex disjoint.}
\label{fig:portraits}
\end{figure*}

\subsection{Edit policies differ across models}

The three models occupy different regions of the classification of
Definition~\ref{def:classes}. Aggregate distortion orders them by
magnitude, as Figure~\ref{fig:atlas} shows, but it obscures a
difference in editing policy that is visible in
Table~\ref{tab:policy} and resolved by family in
Table~\ref{tab:composition}.

Qwen exhibits a predominantly copy-oriented output pattern. It is one
sided on 44 of 45 instances, returning the prompted sublist verbatim
36 times, and emits non-prompt edges on one instance, which is
certified mixed. Llama produces mixed edits on 31 of 45 instances and
is the source of 19 of the 29 certificates. Mistral is bimodal: it is
one-sided on 19 instances and produces the largest hallucination
volume in the corpus on the remainder, reaching $\ECR=5.48$ on g39 and
$\ECR=3.65$ on g16.

Recovery of withheld edges is low for all three. Out of 805 withheld
edges each, Llama recovers 34, Mistral 46 and Qwen 1. Under the null
model of Section~\ref{sec:setup}, which conditions on how many
non-prompt edges each model emitted, Llama is above chance at
$z=+5.46$, whereas Mistral is indistinguishable from structure-blind
guessing at $z=+1.17$ despite its higher raw count: its 46 hits follow
from 732 attempts against a chance expectation of 40.6. Qwen emits
only four non-prompt edges in total, so we do not interpret a
standardized recovery score.

Chance is a weak reference point, so Table~\ref{tab:baselines}
compares each model against classical link prediction heuristics given
the same emission budget on the same visible graph. In aggregate Llama's 34 exceeds the best baseline, Adamic--Adar at
27.2, while Mistral's 46 falls below both common neighbors at 47.4
and Adamic--Adar at 50.9. The per family breakdown shows that this ordering
is not uniform. On community graphs Adamic--Adar recovers 17.0 and 20.4 against the
models' 11 and 12, since dense blocks make most withheld edges close a
triangle. On scale-free graphs 68 per cent of withheld edges have no
common neighbors, because preferential attachment graphs are locally
tree like, which is why the neighborhood scores are weak there while
preferential attachment, whose score is a degree product and needs no
triangles, recovers 14.8 and 33.6 against the models' 9 and 21. In both families the models are beaten by the heuristic that
matches the family.

The grid family is the only one on which both evaluated models
outperform all matched-volume baselines. Here common neighbors and
Adamic--Adar have no signal: every withheld grid edge has zero common
neighbors because the lattice is bipartite, so both rank the withheld
edges at the bottom of their candidate lists. Preferential attachment
remains defined but is also weak. The models recover 14 and 13 there.
Since the fixed row-major labeling exposes adjacency through index
differences of one or the row length, the model advantage is
consistent with serialization cues rather than topological reasoning,
and is specific to the labeling and prompt format of
Section~\ref{sec:setup}.

Two further patterns hold across the corpus. No reconstruction
recovers a withheld edge without also hallucinating at least one edge,
so no completion is clean. Reproduction itself is lossy: of the 2364
prompt edges supplied to each model, Llama fails to return 54, Mistral
109 and Qwen 40, or 2.3, 4.6 and 1.7 per cent.

\begin{table}[t]
\caption{Edit policy per model. Classes are exact sublist / other one
sided / mixed; recovery is out of 805 withheld edges; $z$ is the
standardized recovery under the null model of Section~\ref{sec:setup}.
Qwen's score is not reported because it emitted only four non-prompt
edges.}
\label{tab:policy}
\centering
\begin{tabular}{lcrrrr}
\toprule
Model & Classes & Recov. & Halluc. & Dropped & $z$\\
\midrule
Llama 3.1 & 9/5/31 & 34 & 430 & 54 & $+5.46$\\
Mistral 7B & 12/7/26 & 46 & 686 & 109 & $+1.17$\\
Qwen 2.5 & 36/8/1 & 1 & 3 & 40 & ---\\
\bottomrule
\end{tabular}
\end{table}

\begin{table}[t]
\caption{Edit-class composition by model and graph family, as exact
sublist / other one-sided / mixed.}
\label{tab:composition}
\centering
\begin{tabular}{lccc}
\toprule
Model & Community & Scale-free & Grid\\
\midrule
Llama 3.1 & 6/0/9 & 1/3/11 & 2/2/11\\
Mistral 7B & 2/5/8 & 4/0/11 & 6/2/7\\
Qwen 2.5 & 11/3/1 & 13/2/0 & 12/3/0\\
\bottomrule
\end{tabular}
\end{table}

\begin{table*}[t]
\caption{Withheld-edge recovery at matched emission volume, averaged
over $1{,}000$ tie permutations. CN: common neighbors; AA:
Adamic--Adar; PA: preferential attachment; Rand: hypergeometric
expectation. Bold marks the largest recovery in each row. Qwen is
omitted (four non-prompt edges total).}
\label{tab:baselines}
\centering
\footnotesize
\begin{tabular}{llrrrrrr}
\toprule
Model & Family & Wthld & LLM & Rand & CN & AA & PA\\
\midrule
Llama 3.1 & Community & 386 & 11 & 4.2 & \textbf{17.1} & 17.0 & 1.8\\
 & Scale-free & 225 & 9 & 6.9 & 7.9 & 10.1 & \textbf{14.8}\\
 & Grid & 194 & \textbf{14} & 3.2 & 0.1 & 0.1 & 0.2\\
\midrule
Mistral 7B & Community & 386 & 12 & 8.1 & 19.4 & \textbf{20.4} & 6.7\\
 & Scale-free & 225 & 21 & 21.4 & 21.8 & 24.3 & \textbf{33.6}\\
 & Grid & 194 & \textbf{13} & 11.1 & 6.2 & 6.2 & 4.1\\
\bottomrule
\end{tabular}
\end{table*}

\subsection{The degenerate regime and aggregate distortion}

On all 57 instances that return the prompted sublist, the closed form
\eqref{eq:echoform} holds to machine precision, with a maximum
deviation of $1.8\times10^{-15}$ computed from the raw spectra.
The closed form matches the observed mean in all nine cells. The size trend follows
the average degree as predicted: community distortion grows from 0.911
to 2.087 across the three levels, while grid and scale-free
distortion stay near constant, at 0.751 to 0.866 and 0.933 to 0.960
respectively. These values themselves do not distinguish models.

Figure~\ref{fig:atlas} shows the mean distortion per model and family,
which is how such evaluations are usually summarized. Two readings of
that matrix are misleading. First, the Qwen row of 1.457, 1.004 and
0.830 is produced by 36 verbatim returns out of 45, so by
Proposition~\ref{prop:echo} it is close to a deterministic function of
the benchmark densities rather than a profile of the model. Second,
the Mistral grid cell has mean 1.713 and maximum 13.93, the latter
from g39 alone, so the cell average describes no typical instance.
The matrix supports the conclusion that Mistral distorts most, which
is true, and supports no conclusion at all about how.

\begin{figure}[t]
\centering
\includegraphics[width=\columnwidth]{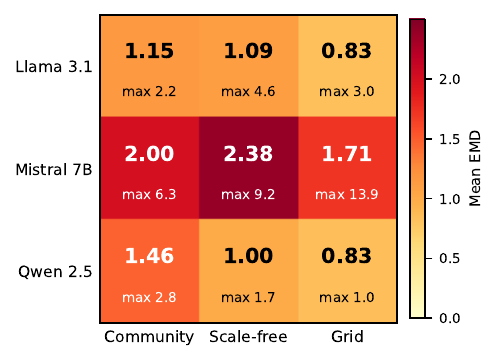}
\caption{Mean $\EMD$ and cell maximum over 15 outputs per
model-family cell. The Qwen row largely tracks the benchmark density
profile; the Mistral grid cell is driven by one instance.}
\label{fig:atlas}
\end{figure}

\section{Discussion and Conclusion}
\label{sec:discussion}

The bounds are independent of how $G'$ is produced. In our corpus 77
of 135 outputs are one-sided, so by Corollary~\ref{cor:collapse} the
distortion is a rescaled edge count with no edge-identity information:
on the 57 exact sublist returns it is fixed by $n$, $|E|$ and $\rho$,
and on the other 20 it reflects output volume but not edit mechanism.

The trace excess recovers part of this loss at no additional
measurement cost. It certifies 29 mixed outputs, including g5 and g22
where $\ECR=1$ hides 38 and 30 edits. The certificate is intentionally
conservative: 29 other mixed outputs have $X=0$ because their sorted
spectra remain pointwise ordered. Example~\ref{ex:k4} gives one
sufficient positive semidefinite mechanism, and Example~\ref{ex:iso}
shows that mixed edits can even be Laplacian isospectral. Thus $X>0$
is conclusive, whereas $X=0$ is not. Three counts are zero across the
corpus: no output adds edges without also losing some, none returns a
rearranged same-sized subset, and none recovers a withheld edge
without hallucinating one, so no completion is clean.

The models exhibit distinct policies. Qwen mostly copies the prompt,
Llama more often attempts completion, and Mistral is bimodal and can
be hallucination-heavy. Recovery stays low, and the matched-volume
baselines show that raw hits must be read relative to emission volume
and graph family. Aggregate $\EMD$ orders the models by magnitude,
as Figure~\ref{fig:atlas} shows, but does not reveal these mechanisms.
For fixed-vertex evaluations based on combinatorial-Laplacian $W_1$,
EMD should be reported with ECR and $X$: ECR records net edge-count
change, while $X>0$ certifies simultaneous addition and deletion.

\paragraph{Limitations}
We study three 7--8B open-weight models, synthetic graphs, one keep
ratio, and one prompt, node labeling and edge ordering. Larger or
proprietary models may behave differently, and random relabeling or a
different serialization may change the observed policies. The withheld
set is generally not uniquely identifiable, so the experiment
characterizes behavior under this protocol rather than general
graph-reasoning ability. Parsing fixes the vertex set, and the uniform
null model does not represent all plausible structured guessing.

\paragraph{Conclusion}
We proved a sharp sandwich for combinatorial-Laplacian $W_1$
distortion, identified its exact one-sided collapse, and derived a
scalar certificate of mixed editing with $\min(a,d)\ge nX/4$. Across
135 model outputs the degenerate regime is the majority and the
certificate detects mixed edits hidden by edge-count summaries. The
conclusions apply to fixed-vertex evaluations using this spectral
metric; $X>0$ is conclusive, while $X=0$ remains compatible with mixed
editing.

\end{document}